\newtheorem{theorem}{Theorem}
\newtheorem{proof}{Proof}
\newtheorem{definition}{Definition}
\title{Bayesian Evidential Learning for Few-Shot Classification}
\author{%
%   Xiongkun\thanks{Use footnote for providing further information
%     about author (webpage, alternative address)---\emph{not} for acknowledging
%     funding agencies.} \\
  Xiongkun Linghu \\
  Department of Electronic Engineering \\
  Tsinghua University\\
  Beijing, China \\
  \texttt{lhxk20@mails.tsinghua.edu.cn} \\
  % examples of more authors
   \And
   Yan Bai \\
   School of Computer Science\\
   Peking University \\
   Beijing, China \\
   \texttt{yanbai@pku.edu.cn} \\
   \And
   Yihang Lou \\
   Intelligent Vision Dept \\
   Huawei Technologies \\
   Beijing, China \\
   \texttt{louyihang1@huawei.com} \\
   \And
   Shengsen Wu \\
   The SECE of Shenzhen Graduate School \\
   Peking University \\
   Shenzhen, China \\
   \texttt{sswu@pku.edu.cn} \\   
   \And
   Jinze Li \\
   School of Integrated Circuits \\
   University of Chinese Academy of Sciences \\
   Beijing, China \\
   \texttt{lijinze20@mails.ucas.ac.cn} \\   
   \And
   Jianzhong He \\
   Intelligent Vision Dept \\
   Huawei Technologies \\
   Beijing, China \\
   \texttt{jianzhong.he@huawei.com} \\      
   \And
   Tao Bai \\
   Intelligent Vision Dept \\
   Huawei Technologies \\
   Beijing, China \\
   \texttt{baitao13@huawei.com} \\      
  % \AND
  % Coauthor \\
  % Affiliation \\
  % Address \\
  % \texttt{email} \\
  % \And
  % Coauthor \\
  % Affiliation \\
  % Address \\
  % \texttt{email} \\
  % \And
  % Coauthor \\
  % Affiliation \\
  % Address \\
  % \texttt{email} \\
}
\begin{document}

\maketitle

\begin{abstract}
Few-Shot Classification(FSC) aims to generalize from base classes to novel classes given very limited labeled samples, which is an important step on the path toward human-like machine learning. State-of-the-art solutions involve learning to find a good metric and representation space to compute the distance between samples. Despite the promising accuracy performance, how to model uncertainty for metric-based FSC methods effectively is still a challenge. To model uncertainty, We place a distribution over class probability based on the theory of evidence. As a result, uncertainty modeling and metric learning can be decoupled. To reduce the uncertainty of classification, we propose a Bayesian evidence fusion theorem. Given observed samples, the network learns to get posterior distribution parameters given the prior parameters produced by the pre-trained network. Detailed gradient analysis shows that our method provides a smooth optimization target and can capture the uncertainty. The proposed method is agnostic to metric learning strategies and can be implemented as a plug-and-play module. We integrate our method into several newest FSC methods and demonstrate the improved accuracy and uncertainty quantification on standard FSC benchmarks.

\end{abstract}

\section{Introduction}

Few-shot Classification(FSC) is a rapidly growing area of machine learning that aims to build classifiers to adapt to novel classes given few samples. To facilitate few-shot learning for fast adaption, meta-learning has been employed to simulate few shot tasks during training, by either designing an optimal algorithm for fast adaption\cite{MAML, LEO, ABM, BayesMAML, ProbMAML} in the inner loop, learning a shared feature space for prototype-based classification\cite{ProtoNet, deepEMD, PAL, BML, MatchingNet}. Popular meta-learning methods, especially metric-based methods have advanced the state-of-the-art in FSC\cite{BML,deepEMD,PAL}. These metric-based methods are developed based on the deterministic neural networks, which provide the point estimation of class probability. For example, the softmax output just provides probabilities for each class, but cannot tell us how confident the neural network is about the predictions. To get more reliable predictions, uncertainty is an important consideration.

\par

Uncertainty-based algorithms can be divided into two main categories, \textit{i.e.}, Bayesian and non-Bayesian methods. Bayesian methods\cite{BNN_LeCUN,BNN_MacKay} place a distribution over model parameters and can provide a natural framework for capturing this inherent model uncertainty. In a Bayesian approach, a prior distribution is first placed over model parameters. In the Bayesian few-shot classification, the prior parameters are base leaner and updated in the outer loop. Given the observed data, the posterior distribution over parameters is computed by Bayesian inference. However, specifying meaningful priors in parameter space is known to be hard due to the complex relationship between weights and network outputs\cite{FUNCTIONALVARIATIONALBNNs}. Thus, modeling uncertainty by placing distribution over model parameters is difficult to benefit metric-based FSC methods.

Instead of indirectly modeling the uncertainty through network weights, evidential deep learning(EDL)\cite{EDL} introduces the subjective logic theory to directly model uncertainty by regarding class probabilities as variable random variables. EDL has been proved to be effective in vision tasks such as open-set action recognition\cite{DEAR} and multi-view classification\cite{Multiview}. However, EDL-related methods \cite{DEAR,Multiview,BMloss} cannot provide the task-related prior information.
\par

\par
This paper focuses on improving the uncertainty quantification and accuracy methods for metric-based FSC methods. To model uncertainty and provide task-related prior, we introduce the theory of evidence and propose a new Bayesian evidence learning(BEL) method. The uncertainty modeling and metric learning strategy can be decoupled by placing Dirichlet distribution over class probability. 
We propose a Bayesian evidence fusion theorem to provide task-related prior parameters according to the Bayesian rule. This theorem allows us to establish a Bayesian way to fuse the pre-trained and meta-trained network distribution parameters. 
The fixed pre-trained network provides the prior class probability distribution parameters for each task. 
Given observed samples, the posterior distribution is computed using Bayesian inference. Our method utilizes prior knowledge at the class probability distribution level, thus making it easy to interpret how the prior knowledge influences the predictions explicitly. The gradient analysis shows that our method provides a smooth optimization target and generates an adaptive gradient by capturing uncertainty.
% 最后一句还是存在一些疑惑
\par
BEL has several practical benefits for FSC. Firstly, BEL decouples the metric learning strategy and uncertainty modeling. It provides task prior information for the neural network in the meta-training stage to benefit the optimization process. Secondly, BEL is agnostic to the metric training strategy, making it flexible to improve accuracy and uncertainty quantification for metric-based methods. Finally, BEL has no extra computation or memory cost compared to prior Bayesian meta-learning methods\cite{ABM,ProbMAML, BayesMAML}.
\par
In summary, the specific contributions of this paper are:
\begin{itemize}
  \item [1)] 
         We propose a novel Bayesian evidence learning(BEL) method to provide task-related class probability distribution prior to metric-based FSC methods and decouple the uncertainty modeling and metric learning strategy. %检查一下语法，decouples主语是谁
         % 给出方法的总概括性介绍
  \item [2)]
        We propose a Bayesian evidence fusion theorem that the meta-trained network learns from prior distribution parameters to reduce the uncertainty of predictions.
  \item [3)]
        Our method provides a smooth optimization target and generates an adaptive gradient by capturing uncertainty. % adaptive gradient 是什么，intro没有出现这个词
  \item [4)]
        We integrate our method into several newest FSC methods and conduct extensive experiments in 5 benchmarks. Experimental results demonstrate BEL can improve the accuracy and reduce the calibration error.
\end{itemize}

\section{Related work}

\textbf{Few shot classification(FSC)} aims to learn how to generalize to new classes with limited samples in each class. Current few-shot learning mainly includes optimization-based methods \cite{MAML,LEO,TaskAgnosticMeta-Learning,AnEnsembleofEpoch-WiseEmpiricalBayes,Meta-TransferLearningThroughHardTasks,MetaTransferLearning,ABM,DenseGP,DeepKernel} and metric-based methods \cite{ProtoNet,MatchingNet,FEAT,deepEMD,MetaBaseline,CrossAttention,FRN}. Optimization-based methods learn to fast adapt to new tasks. These kinds of methods tend to learn meta knowledge across training tasks. For example, MAML\cite{MAML} regards the optimization initialization as a base learner and learns to provide good initialization to fast adapt to novel classes in a few steps of gradient descent. LEO\cite{LEO} performs meta-learning in a learned low-dimensional latent space from which the parameters of a classifier are generated.
% 总结句
Metric-based methods try to measure the similarity between support samples and query samples. Prototypical Networks\cite{ProtoNet} make predictions based on distances to nearest class centroids. Relation Networks adopts a learnable correlation calculation module to measure pairwise similarity. Two-stage methods \cite{MetaBaseline,deepEMD,FEAT} first learn embedding space, then tune the model parameters during meta-training. These methods mainly focus on the training strategies for feature space and classifiers, while ignoring the uncertainty calibration, which is an important concern in real-world applications\cite{DiseaseDiagnosis,FacialGestureRecognition}.

\par
\textbf{Uncertainty-based Learning}. 
 Traditional Bayesian approaches \cite{BNN_LeCUN,BNN_MacKay,DeepKernel} model uncertainty by placing a distribution over model parameters and estimate uncertainty by inferring a posterior distribution. Since BNNs suffer from high computational costs and have difficulty in convergence, non-Bayesian approaches have been proposed, such as MC-Dropout\cite{MC_dropout}, deep ensemble\cite{DeepEnsemble} and evidential deep learning\cite{EDL}. MC-Dropout models the uncertainty by sampling from the model parameters, while deep ensemble model the uncertainty by fusing the predictions of different random initialized models. Both MC-Dropout and deep ensemble are implemented in the inference stage. Evidential deep learning(EDL) models the uncertainty of classification by placing a distribution over class probability. However, these methods cannot provide task-related prior.

\par
\textbf{Bayesian Few-shot Classification}. Recently, Bayesian FSC methods that attempt to infer a posterior over task-specific parameters have been proposed. Bayesian MAML\cite{BayesMAML} combines efficient gradient-based meta-learning with nonparametric variational inference in a principled probabilistic framework. ABML\cite{ABM} amortizes hierarchical variational inference across tasks, learning a prior distribution over neural network weights. Deep Kernel Transfer\cite{DeepKernel, PolyaGamma} uses Gaussian processes with least squares classification to perform FSC and learn covarince functions parameterized by DNNs. These methods provide a framework to learn task specific prior. However, this treatment of model uncertainty brings complex relationship between weights and network outputs\cite{FUNCTIONALVARIATIONALBNNs} and makes it difficult to be introduced to metric-based FSC methods. To overcome this limitation, we model the uncertainty by placing a distribution over class probability. In this way, the uncertainty modeling and the metric learning strategy can be decoupled. Our method also applies Bayesian approaches to provide task related prior to benefit the FSC.

\section{Method}

We first introduce evidential deep learning for typical metric-based FSC methods and interpret its main differences compared to softmax cross entropy loss. 
Then we propose our Bayesian Dirichlet concentration fusion theorem to provide a Bayesian way to fuse evidence from pre-traianed network and meta-trained network. Finally, we give the overall framework of our method and analyze the gradient in detail to understand the optimization process of our framework.

\subsection{Preliminary} 
In the standard FSC scenario, given a base dataset with $\mathcal{C} _{base}$ classes and a novel dataset with $\mathcal{C} _{novel}$ classes, where $\mathcal{C} _{base}\cap \mathcal{C} _{novel}=\emptyset	 $. A training algorithm is usually performed on $\mathcal{C} _{base}$ classes and the optimization goal is learning to generalize the knowledge from base classes to novel classes.
 Recently, two-stage based methods \cite{deepEMD,FEAT} have achieve the state-of-the-art performance on FSC. These methods first pre-train the network on the merged datasets of base classes(pre-training stage), then tune the parameters using meta-learning(meta-training stage). Our methods model uncertainty and keeps the metric strategy unchanged. The pre-trained network provides prior class probability distribution prior. The meta-trained network learns to generate posterior class probability distribution.

\subsection{The Theory of Evidence}
\label{evidence}
In this subsection, we introduce evidential deep learning(EDL) to quantify the classification uncertainty. % 搭配用大家熟知的
EDL places a distribution over class probability and provides a natural preparation for a Bayesian way to integrate beliefs of different stages\cite{EDL}. %另成一句话 stage 是什么？如果方法分多阶段，需要在preliminary讲清楚
% 
% belief 简要说明
In the context of few-shot classification, Subject Logic(SL)\cite{EDL} associates the parameters of the Dirichlet distribution with the belief distribution. The Dirichlet distribution can be considered as the conjugate prior of multinomial distribution. % where 用的对吗？where去掉，拆成两句吧

% \begin{figure}
% %是可选项 h表示的是here在这里插入，t表示的是在页面的顶部插入
% \centering
% \includegraphics[scale=1.2]{}
% \caption{Examples of Probability Simplex}
% \label{simplex_examples}
% \end{figure}

\begin{figure}[htbp]
\setlength{\abovecaptionskip}{0.cm}
\centering
\subfigure[]{
\begin{minipage}[scale=0.6]{0.25\linewidth}
\label{Fig1a}
\centering
\includegraphics[width=0.8in, height=0.8in]{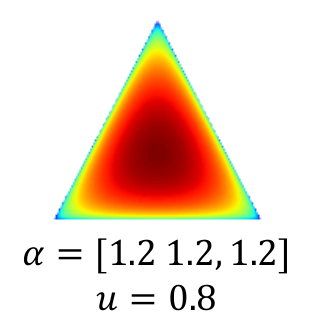}
%\caption{fig1}
\end{minipage}%
}%
\subfigure[]{
\begin{minipage}[scale=0.6]{0.25\linewidth}
\label{Fig1b}
\centering
\includegraphics[width=0.8in,height=0.8in]{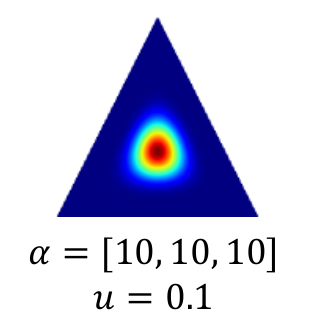}
%\caption{fig2}
\end{minipage}%
}%
\subfigure[]{
\begin{minipage}[width=0.8\textwidth,height=\textwidth]{0.25\linewidth}
\label{Fig1c}
\centering
\includegraphics[width=0.8in,height=0.8in]{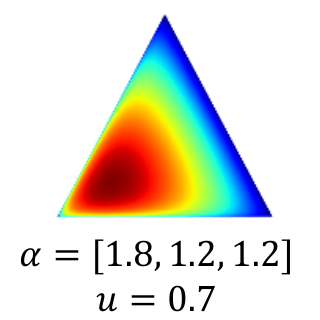}
%\caption{fig2}
\end{minipage}
}%
\subfigure[]{
\begin{minipage}[scale=0.6]{0.25\linewidth}
\label{Fig1d}
\centering
\includegraphics[width=0.8in,height=0.8in]{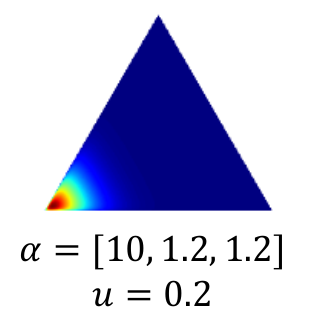}
%\caption{fig2}
\end{minipage}
}%
\centering
\caption{Samples of simplex}
\end{figure}

\par
The evidence is relative to the concentration parameters of Dirichlet distribution(Defination in Appendix A1). Specifically, SL assigns a belief mass to each class and an overall uncertainty mass based on the evidence. Accordingly, for the $s^{th}$ stage of few-shot classification (pre-trained stage or meta-trained stage), the \textit{K}+1 mass values are all non-negative and sum up to one, \textit{i.e.},
\begin{equation}
    u^{s}+\sum^{K}_{k=1}b_k^s=1,
\end{equation}
where $u^{s}\ge0$ and $b_k^s\ge0$ indicate the overall uncertainty and the probability for the $k^{th}$ class, respectively. A belief mass for the class \textit{k} is computed using the evidence for corresponding class. Let $e_k\ge0$ be the evidence derived for the $k^{th}$ class, then belief $b_k^s$ and uncertainty $u_k^s$ are computed as
\begin{equation}
\label{eq2}
    b^s_k=\frac{e^s_k}{S^s}=\frac{\alpha_k^s-1 }{S^s}  \quad \mathrm{and} \quad u^s=\frac{K}{S^s},
\end{equation} 
where $S^s= {\textstyle \sum^{K}_{i=1}} (e^s_i+1)= {\textstyle \sum_{i=1}^{K}}\alpha_i^s$ is the Dirichlet strength. Eq. \ref{eq2} describes the phenomenon where the more evidence observed from the $k^{th}$ class, the corresponding predicted probability is greater. The less total evidence for means higher uncertainty for the prediction of this class. A belief assignment can be considered as subjective opinion corresponding to Dirichlet distribution with parameters $\alpha^s_k$. Given an opinion, the expected probability for the $k^{th}$ class is the mean of the corresponding Dirichlet distribution, which can be computed as $\hat{p} ^s_k=\frac{\alpha^s_k}{S^s}$. The parameter $\alpha^s_k$ is predicted by the neural network on the $s^{th}$ stage.

\par
The softmax output of a standard neural network classifier is a probability point estimation, which can be considered as a single point on a simplex. While Dirichlet distribution parameterized over evidence represents the density of each such probability assignment; hence it models the second-order probabilities and uncertainty, and provides natural condition to introduce Bayesian method to fuse the evidence of pre-trained stage and meta-trained stage. 
\par
For clarity, we provide a toy example under a triplet classification task to illustrate the difference from softmax classifiers. To calibrate the predictive uncertainty, the model is encouraged to learn a sharp simplex for accurate prediction(Fig. \ref{Fig1a}), and to produce a flat distribution for inaccurate prediction in Fig. \ref{Fig1d}. The case in Fig. \ref{Fig1b} and Fig. \ref{Fig1c} are unexpected.
\par

\subsection{Bayesian Evidence Fusion for Few-shot Classifcation} %
\label{BBF}
Having introduced evidence and uncertainty for FSC, now we focus on fusing the evidence of pre-training stage and meta-training stage in a Bayesian way. We first propose a Bayesian evidence fusion theorem to get posterior Dirichlet parameters and provide the sketch of proof. Then we show how to introduce the theorem to metric-based FSC methods.
\begin{theorem}
\label{Theorem1}
Given the prior dirichlet distribution $p(\mathbf{z}|\bm{\beta})=\mathrm{Dir}(\mathbf{z}|\bm{\beta})$ and distribution parameters collected from observed samples $\bm{\gamma}$(here we regard $\bm{\gamma}$ as random variable vector instead of deterministic parameter vector), then the posterior distribution $p(\mathbf{z} |\bm{\beta},\bm{\gamma})= \mathrm{Dir} (\mathbf{z}|\bm{\beta}+\bm{\gamma})$
\end{theorem}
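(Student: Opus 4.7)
The plan is to treat this as a standard conjugacy argument. Since the Dirichlet density has the form $\mathrm{Dir}(\mathbf{z}\mid\bm{\beta}) \propto \prod_{k=1}^{K} z_k^{\beta_k-1}$ on the simplex, and the pseudo-count vector $\bm{\gamma}$ is designed to behave like a sufficient statistic of a multinomial observation, the entire claim should reduce to multiplying out exponents and recognizing the resulting functional form on the simplex.

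First I would write Bayes' rule in its unnormalized form,
\begin{equation*}
p(\mathbf{z}\mid \bm{\beta},\bm{\gamma}) \;\propto\; p(\bm{\gamma}\mid \mathbf{z})\, p(\mathbf{z}\mid \bm{\beta}),
\end{equation*}
dropping the evidence $p(\bm{\gamma}\mid\bm{\beta})$ since it is a normalizing constant with respect to $\mathbf{z}$. The next step is to pin down the likelihood. Because $\bm{\gamma}$ is interpreted in Section~\ref{evidence} as the per-class evidence contributed by the observed samples, the natural model is the multinomial-type likelihood $p(\bm{\gamma}\mid\mathbf{z}) \propto \prod_{k=1}^{K} z_k^{\gamma_k}$, viewed as a function of $\mathbf{z}$. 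Substituting the two factors gives
\begin{equation*}
p(\mathbf{z}\mid\bm{\beta},\bm{\gamma}) \;\propto\; \prod_{k=1}^{K} z_k^{\gamma_k}\;\prod_{k=1}^{K} z_k^{\beta_k-1} \;=\; \prod_{k=1}^{K} z_k^{(\beta_k+\gamma_k)-1},
\end{equation*}
which is the unnormalized density of $\mathrm{Dir}(\mathbf{z}\mid \bm{\beta}+\bm{\gamma})$. Restoring the normalization via the multivariate beta function $B(\bm{\beta}+\bm{\gamma})$ finishes the identification and yields the claimed posterior.

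The hard part is more conceptual than algebraic: I need to justify the likelihood $p(\bm{\gamma}\mid \mathbf{z})$ in a setting where $\bm{\gamma}$ is itself a random vector produced by a neural network rather than a genuine count vector drawn from a multinomial. The clean resolution is to declare $\bm{\gamma}$ to be the sufficient statistics of the observed samples with respect to a categorical likelihood, so that the evidence enters the posterior only through the exponents $z_k^{\gamma_k}$. This is precisely the assumption that makes Dirichlet conjugacy applicable and makes the concentration-parameter update additive; once it is stated explicitly, the three-line algebra above completes the proof, and the fusion rule $\bm{\beta}+\bm{\gamma}$ used in the rest of the paper follows immediately.
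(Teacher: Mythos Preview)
Your proposal is correct and matches the paper's own proof essentially line for line: the paper also posits the multinomial-type likelihood $p(\bm{\gamma}\mid\mathbf{z})=\prod_{i=1}^{K} z_i^{\gamma_i}$, applies Bayes' rule in unnormalized form, multiplies exponents, and identifies the result as $\mathrm{Dir}(\mathbf{z}\mid\bm{\beta}+\bm{\gamma})$. Your added paragraph on why this likelihood is the appropriate modeling choice for neural-network-produced $\bm{\gamma}$ is more explicit than the paper, which simply asserts the likelihood without further justification.
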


\begin{proof}
\label{proof}
Given prior distribution $p(\mathbf{z} |\bm{\beta} )=\mathrm{Dir} (\mathbf{z} |\bm{\beta})$, and the distribution of observed random variable $\bm{\gamma}$ is given by: $p(\bm{\gamma} \mathrm{|\mathbf{z } } )=\prod_{i=1}^{K} z_i^{\gamma_i}$, according to Bayesian rule, the posterior distribution is given by:
\begin{align} 
\label{eq31}
p(\mathbf{z} |\bm{\beta},\bm{\gamma}) & \propto p(\mathbf{z} |\bm{\beta} )p(\bm{\gamma} |\mathbf{z} ) =\frac{\Gamma (\beta_0 )}{ {\textstyle \prod^{K}_{i=1}} (\beta_{i})}\prod_{i=1}^{K} z_i^{\beta_i-1} \cdot\prod_{i=1}^{K}z_i^{\gamma _i} \\ 
\label{eq32}
&\propto  \prod_{i=1}^{K}z_i^{\beta _i+\gamma _i-1}\sim \mathrm{Dir} (\mathbf{z } |\bm{\beta} +\bm{\gamma} ). 
\end{align}
\end{proof}
$\beta_0= {\textstyle \sum_{i=1}^{K}\beta_i } $ in Eq. \ref{eq31} and Eq. \ref{eq32}, $\propto$ is the proportional symbol. Given $\frac{\Gamma (\beta_0 )}{ {\textstyle \prod^{K}_{i=1}} (\beta_{i})}$ is constant, Eq. \ref{eq31} is proportional to Eq. \ref{eq32}.

\begin{figure}[!t]
%是可选项 h表示的是here在这里插入，t表示的是在页面的顶部插入
\centering
\includegraphics[scale=0.6]{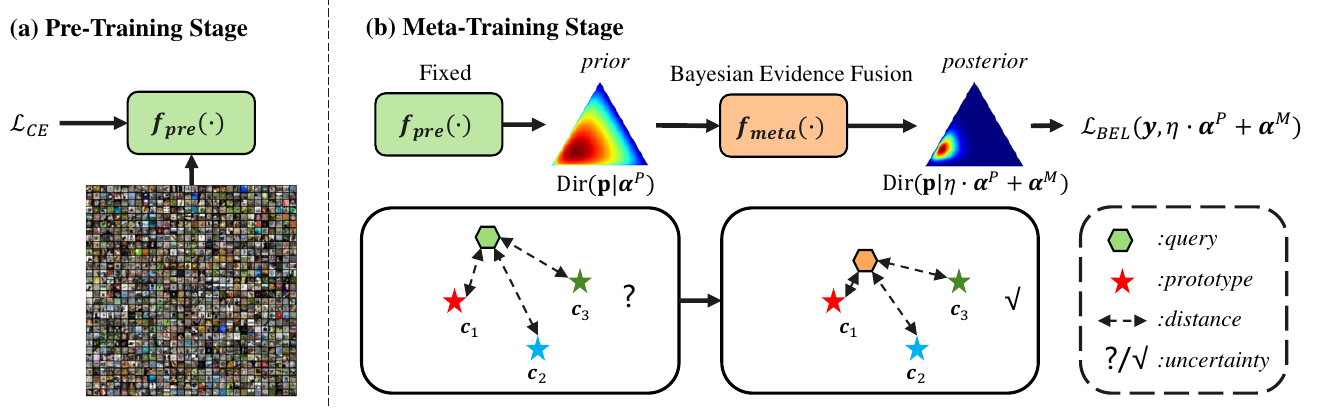}
\caption{The framework of Bayesian Evidential Learning. In the pre-training stage, the network is trained in the merged datasets of base classes. The weighted evidence $\eta \cdot \bm{\alpha}^P$ provides prior evidence with relatively higher uncertainty. Given the prior evidence, the meta-trained network learns to form posterior evidence $\eta\cdot\bm{\alpha}^P+\bm{\alpha}^M$. $f_{pre}$: pre-trained network, $f_{meta}$: meta-trained network}
\label{framework}
\end{figure}

\par
Theorem\ref{Theorem1} provides a Bayesian way to fuse the parameters of pre-trained network and meta-trained network. For few-shot classification, given prior evidence $\bm{e}^{P }$ produced by pre-trained network, meta-stage learns to collect evidence $ \bm{e}^{M}$ from observed parameters predicted by meta-trained network and generate posterior evidence $\bm{e}$, where $e_k=\eta \cdot e^{P}_k+e^{M}_k, k=1,2,...,K$. The weight factor $\eta$ controls 
how prior evidence influences posterior evidence. In the case that pre-trained network provides a good class agnostic embedding, $\eta$ should be higher and vice versa. According to Eq. \ref{eq2}, the posterior evidence and the parameters of the Dirichlet distribution can be written as:
\begin{equation}
\label{eq4}
    u=\frac{K}{S},e_k=b_k\times S \ \mathrm{and} \  e_k=\alpha_k-1.
\end{equation}
Based on the above fusion theorem, we can obtain posterior parameters of Dirichlet distribution $\bm{\alpha}$ to produce the final probability of each class and overall uncertainty. Note that $\bm{\alpha}^P$ is generated by pre-trained network, which is fixed during meta-training stage. While the $\bm{\alpha}^M$ is generated by meta-trained network, which is updated through gradient back propagation during meta-trained stage. In the inference stage, the expected probability for $k^{th}$ class can be computed as $\hat{p} _k=\frac{\alpha_k}{S}$.

\subsection{Learning to Form Posterior Opinions}
In this section, we show how to train neural network in few-shot classification and provide the details of our implementation. The softmax function transfer the logits of neural network to a point estimation of class probabilities. Posterior Dirichlet parameters from section\ref{BBF} can generate the Dirichlet distribution $\mathrm{Dir} (\mathbf{p} _i|\alpha_i )$ given sample $i$, where $\mathbf{p} _i$ is a simplex representing class assignment probabilities. Compared to traditional classifier, the softmax function is replaced with an activation function layer(\textit{i.e., $\mathrm{exp}(\cdot)$}) to ensure the network generates non-negative values. %最后一句语法问题

\par
For cross entropy-loss, the Bayesian risk can be written as:
\begin{equation}
\label{eq5}
    \mathcal{L}(\alpha_i)=\int \left [ \sum_{j=1}^{K}-y_{ij}\mathrm{log}(p_{ij})   \right ]\frac{1}{B(\alpha_i)} \prod_{j=1}^{K}p_{ij}^{\alpha _{ij}-1} d\mathbf{p} _i=\sum_{j=1}^{K}y_{ij} \left ( \psi  (S_i) -\psi (\alpha_{ij} )  \right ),
\end{equation}
where $p_{ij}$ is the predicted probability for $\textit{i}^{th}$ sample of class $j$, $\mathbf{y}_i$ is a one-hot vector indicating the ground-truth class of observation $\mathbf{x}_i$, if the sample $\mathbf{x}_i$ belongs to $k^{th}$ class, $y_{ij}=0$ for $j\ne k$ and $y_{ij}=1$ for $j=k$. $\psi(\cdot)$ is the \textit{digamma} function. Unfortunately, the loss function in Eq. \ref{eq5} cannot provide a good optimization process in meta-stage. The case in Fig. \ref{Fig1c} shows that the model is confident(low uncertainty) for the inaccurate prediction. To penalize this case, we propose a KL divergence term, which can be denoted as: % **补全
\begin{equation}
    \begin{split}
KL\left [  \mathrm{Dir}(\mathbf{p}_i|\alpha_i)  ||\mathrm{Dir}(\mathbf{p}_i|\mathbf{1} )\right ] = & \mathrm{log} \left ( \frac{\Gamma(\alpha_{i0})}{\Gamma(K) {\textstyle \prod_{k=1}^{K}\Gamma(\alpha_{ik} )} }  \right )
 +\sum_{k=1}^{K}(\alpha_{ik}-1)\left [ \psi(\alpha_{ik} -\psi(\alpha_{i0}))  \right ].
\end{split}
\end{equation}

The weighted KL divergence term provides a regularization to penalize the case in Fig. \ref{Fig1c}.  The overall loss function can be written as:
\begin{equation}
\label{eq8}
        \mathcal{L}(\mathbf{y},\mathbf{\alpha })=\frac{1}{N}\sum^{N}_{i=1}  \left [ \sum_{k=1}^{K}\mathbf{y}_{ik}(\psi(\alpha _{i0})-\psi(\alpha _{ik}))+\lambda KL\left [ \mathrm{Dir} (\mathbf{p}_i|\alpha_i)||\mathrm{Dir} (\mathbf{p}_i|\left \langle 1,...,1 \right \rangle ) \right ] \right ] ,
\end{equation}
where $\lambda$ is a factor to control the degree of penalization when the distribution is close to the case shown in Fig. \ref{Fig1c}, $\alpha_k=\eta \cdot \alpha^{P}_k+\alpha^{M}_k$. %  is close the case确认语法
$\alpha^{P}_k$ is generated by fixed pre-trained network and $\alpha^{M}_k$ is generated by meta-trained network.
Our KL-divigence term can provide smooth optimization target and adaptive gradient, which will be discussed in Section \ref{gradient analysis}. 
In the inference, given the posterior opinion, the expected probability for the $k^{th}$ class is the mean of the corresponding Dirichlet distribution and computed as $\hat{p} ^s_k=\frac{\alpha_k}{S^s}$.

\subsection{Gradient Analysis}
\label{gradient analysis}
To further understand the optimization property our framework, we provide gradient analysis in detail. 
For cross entropy loss, % 
the logits of $i^{th}$ class can be defined as $z_{k}=d(\mathbf{x},\mathbf{c}_k), k=1,2,3...,K$, where $\mathbf{c}_k$ is the prototype of $k^{th}$ class in FSC.
%  x 和 c_k 定义。given a sample $$\mathbf{x}$$, the logits of $i_th$ class can be defined as $z_{i}=d(\mathbf{x},\mathbf{c}_k), where... 
Then, the derived probability of every class is $
S(z_{k})=\frac{e^{z_{k}} }{ {\textstyle \sum_{i}}e^{z_{i}}}$. 
We set the base of logarithmic to $e$ in cross entropy loss. Then we can get the gradient $L$ with respect to $z_k$:
\begin{equation}
\label{eq9}
    \frac{\partial L}{\partial z_k } =S(z_k)-y_k,
\end{equation}
The local optimum of Eq. \ref{eq9} is $S(\mathbf{z})=\mathbf{y}$. It's clear that only when $z_k\to \infty$, $S(z_k)\to 1$,  $z_k\to -\infty$, $S(z_k)\to0$, so it is impossible to reach the local optimum.
\par
To solve the above problem, label smooth provides a soft label to overcome the overconfidence problem of softmax\cite{LabelSmooth}. %改为：For the overconfidence problem of softmax, we adopt the label smooth scheme \citep{LabelSmooth} which uses a smooth soft label to replace the one hot label.
Given the soft label $\tilde{\mathbf{y} } =(1-\epsilon )\mathbf{y} +\epsilon \mathbf{u} $, the adjusted cross entropy loss is:
\begin{equation}
\label{eq10}
    L_{ls}(\mathbf{y}_{ls} ,\hat{\mathbf{y}}) = (1-\epsilon )L_{ce}(\mathbf{y} ,\hat{\mathbf{y}})+\epsilon L_{ce}(\mathbf{u} ,\hat{\mathbf{y}}),
\end{equation}
where $\epsilon$ is the smoothing factor to control the smoothness of labels and $\mathbf{u}$ denotes the uniform distribution, $\hat{\mathbf{y}}$ is the prediction of neural network. Then the corresponding gradient of Eq. \ref{eq10} is:
\begin{equation}
\label{eq11}
 \frac{\partial L_{ls}(\mathbf{y}_{ls}, \hat{\mathbf{y} }  ) }{\partial z_k} =(1-\epsilon )(S(z_k)-y_k) +\epsilon (z_k-\frac{1}{K} ).
\end{equation}
Let gradient in Eq. \ref{eq11} equals to zero, the local optimum is: $\mathbf{z} ^{*}=\delta  +\mathrm{log}\left ( \frac{(K-1)(1-\epsilon )}{\epsilon }  \right )  \mathbf{y}$, where $\delta$ is the $z_{false}$\cite{GradientofLS}. % 所有的公式加Eq., Eq. \ref{eq11} 
\par
According to Eq. \ref{eq8}, corresponding gradient can be computed as:
\begin{equation}
\label{eq12}
       \frac{\partial \mathcal{L}(\mathbf{y},\mathbf{\alpha })}{\partial \alpha_k} \approx \psi^{'}(\alpha_k)[-\mathbf{y} _k+\lambda(\alpha_k-1)]-\lambda+\frac{\lambda K+1}{\alpha_0}.
\end{equation}
We provide the proof of Eq. \ref{eq12} in Appendix A2. The local optimum for Eq. \ref{eq12} is $\alpha^{*}=1+\frac{1}{\lambda}\mathbf{y}$. It provides a smooth optimization goal compared to cross-entropy loss. 
When $\lambda$ becomes larger, the optimization process becomes more difficult. $\alpha_k$ corresponding to $y=1$ measures the distances between the query sample and prototypes, while  $\alpha_k$ corresponding to $y=0$ measures the distances between sample and non-prototypes in the feature space, just as the Fig. \ref{framework} shows. 
So $\lambda$ controls the degree that how close the sample point is to the prototype, which significantly influences the optimization process.
\par
$\frac{\lambda K+1}{\alpha_0} $ is proportional to $u=\frac{K}{\alpha_0} =\frac{K}{S} $, so higher uncertainty generates higher gradient for optimization. The first term $(1-\epsilon )(S(z_k)-y_k)$ of Eq. 11 computes the loss of predictions and the hard local optimum. The second term $\epsilon (z_k-\frac{1}{K})$ in the label smooth makes the predictions smooth to release overconfidence. While the first term $\psi^{'}(\alpha_k)[-\mathbf{y} _k+\lambda(\alpha_k-1)]$ in the Eq. \ref{eq12} computes the loss of predictions and the smooth local optimum, and the second term $-\lambda+\frac{\lambda K+1}{\alpha_0}$ captures the uncertainty.

\section{Experiments}
\label{Exp}
Our method can be a plug-and-play module for metric-based methods. In the pre-training stage, we train the network with the cross-entropy loss. The fixed pre-trained neural network produces the prior evidence parameters $\bm{\alpha}^P$. It can be computed as $\bm{\alpha}^P=\bm{e}^P+1=\mathrm{exp} (g_{\theta}(\mathbf{x}))+1$, where $g_{\theta}(\mathbf{x})$ is the output logits of pre-trained network. The neural network will be tuned in meta-learning stage with the loss function in Eq. \ref{eq8}, where $\bm{\alpha}^M=\bm{e}^M+1=\mathrm{exp} (f_{\theta}(\mathbf{x}))+1$, $f_{\theta}$ is the parameters of meta-trained network.
\subsection{Training Setup and evaluation}
\label{training setup}
\par
\textbf{Baseline methods}. We choose Meta-Baseline\cite{MetaBaseline} and DeepEMD\cite{deepEMD} as our baseline methods. Metabaseline\cite{MetaBaseline} applies \textit{consistent sampling} for evaluating the performance. For the novel class split in a dataset, the sampling of testing few-shot tasks follows a deterministic order. It allows us to get a better model comparison with the same number of sampled tasks and demonstrate our method. We choose ResNet-12 and Conv4 as backbones. DeepEMD uses EMD distance as the metric in the meta-learning stage and achieves state-of-the-art performance. To validate that our method can be introduced to complex metric learning strategy, we implement DeepEMD-Sampling and set the patch number to 9. For comparison, 
we ensure that the training settings of baseline methods and our method are consistent. The only difference is that we replace the cross-entropy loss with Eq.\ref{eq8}. Implementation details can be found in Appendix B.

\par
\textbf{Evaluation Datasets}. For Meta-Baseline, we evaluate the results on \textit{mini}Imagenet\cite{MatchingNet}, \textit{tiered}Imagenet\cite{tieredImagenet}. For DeepEMD, we do more experiments on CIFAR-FS\cite{CIFARFS}, FC100\cite{TADAM} and CUB\cite{CUB}. The evaluation results of \textit{mini}Imagenet and \textit{tiered}Imagenet are shown on Table \ref{acc_minitiered} and Table \ref{ECE_minitiered}.
To check the accuracy performance and uncertainty quantification, we compare our methods with two kinds of FSC methods, metric-based methods\cite{FEAT,BML,deepEMD,MetaBaseline} and uncertainty-based FSC methods\cite{ABM,ProbMAML,METAQDA}.

\subsection{Uncertainty Quantification through Calibration} 
Uncertainty quantification is an important concern for few-shot classification, which is omitted in the mainstream metric-based methods\cite{deepEMD, MetaBaseline, PAL, BML}. We choose expected calibration error(ECE)\cite{ECE, ECEAAAI} as metrics for calibration. ECE measures the expected binned difference between confidence and accuracy. For input sample $\mathbf{x}$, and $\mathrm{max}_i \phi_i(f^\mathbf{W}(\mathbf{x} ))$ is regarded as a prediction confidence. The ECE of $f^\mathbf{W}$ on $\mathcal{D}$ with M groups is $    \mathrm{ECE}_M(f^\mathbf{W},\mathcal{D}  ) =\sum_{i=1}^{M} \frac{|\mathcal{G}_i |}{|\mathcal{D} |} |\mathrm{acc } (\mathcal{G}_i)-\mathrm{conf}(\mathcal{G}_i)$.
where  $\mathcal{G} _i=\left \{ j:i/M<\mathrm{max}_k\phi_k (f^\mathbf{W}(\mathbf{x}^{(j)} ) )\le (i+1)/M  \right \} $, acc($\mathcal{G} _i$) and conf($\mathcal{G} _i$) are average accuracy and confidence in the i-th group respectively. We average ECE of all the test episodes for each methods.

\subsection{Results}

\begin{table}
\renewcommand\arraystretch{0.8}
  \caption{Few-shot classification accuracy and 95\% confidence interval on \textit{mini}Imagenet and \textit{tiered}Imagenet datasets. \ddag: transductive inference \dag: results reported in \cite{ABM}, BEL represents Bayesian evidential learning.} 
  \label{acc_minitiered}
  \centering

  \begin{threeparttable}
  \begin{tabular}{cccccc}
    \toprule
    \multirow{2}*{Algorithm} & \multirow{2}*{Backbone}  &  \multicolumn{2}{l}{\textit{mini}Imagenet,5-way} & 
    \multicolumn{2}{l}{\textit{tiered}Imagenet,5-way} \\
    \Xcline{3-6}{0.4pt}
    & & 1-shot & 5-shot & 1-shot & 5-shot \\
    \midrule
    ProbMAML\cite{ABM}\tnote{\ddag\dag} & Conv4  & 47.8$\pm$0.61 & --  & --  & -- \\
    ABML\cite{ABM}\tnote{\ddag\dag} & Conv4  & 45.0$\pm$0.60 & --  & --  & -- \\
    MetaBaseline\cite{MetaBaseline} & Conv4  & 47.93$\pm$0.23 &  63.87$\pm$0.19 & 51.24$\pm$0.24 & 68.61$\pm$0.20\\
    MetaBaseline+BEL\cite{MetaBaseline} & Conv4  & \textbf{48.38$\pm$0.21} & \textbf{65.27$\pm$0.19}  &  \textbf{51.89$\pm$0.23} & \textbf{70.53$\pm$0.20} \\
    \midrule
    BML\cite{BML} & ResNet-12 & 67.04$\pm$0.63 & 83.63$\pm$0.29 & 68.99$\pm$0.50 & 85.49$\pm$0.34\\
    FEAT\cite{FEAT} & ResNet-12 & 66.78$\pm$0.20 &82.05$\pm$0.14 & 70.80$\pm$0.23 & 84.79$\pm$0.16\\
    ClassifierBaseline\cite{MetaBaseline} & ResNet-12  & 59.03$\pm$0.23 & 77.99+-0.17  & 68.07$\pm$0.26  &  83.73+-0.18\\
    MetaBaseline\cite{MetaBaseline} & ResNet-12  & 62.69$\pm$0.23 & 78.89+-0.16  & 68.65$\pm$0.26  & 83.53+-0.18 \\
    MetaBaseline+BEL     & ResNet-12 & \textbf{63.10$\pm$0.24} & \textbf{79.60+-0.16}  & \textbf{69.92$\pm$0.26}  &  \textbf{84.66+-0.18}\\
    DeepEMD\cite{deepEMD} & ResNet-12 & 68.14$\pm$0.29 & 83.62$\pm$0.57 & 73.39$\pm$0.31 & 87.34$\pm$0.58\\
    DeepEMD+BEL & ResNet-12 & \textbf{68.40$\pm$0.28} & \textbf{83.81$\pm$0.57} & \textbf{73.95$\pm$0.30} & \textbf{87.91$\pm$0.59} \\
    MQDA\cite{METAQDA} & ResNet-18 & 65.12 $\pm$0.66 & 80.98$\pm$0.75 & 69.97$\pm$0.52 & 85.51$\pm$0.58\\
    
    \bottomrule
  \end{tabular}
    \end{threeparttable}
\end{table}

\textbf{Accuracy comparison}. We follow the standard-setting conduct experiments on \textit{mini}Imagenet and \textit{tiered}Imagenet. The results are shown in Table \ref{acc_minitiered} and Table \ref{ECE_minitiered}. We compare DeepEMD and Meta-Baseline with our methods and mark the better one in bold. Table \ref{acc_minitiered} shows that our method can significantly improve the accuracy for Meta-Baseline. For example, 69.92\% vs 68.65\% for 1-shot and 84.66\% vs 83.53\% for 5-shot on \textit{tiered}Imagenet. We find that our method improves DeepEMD and achieves SOTA performance. Eq. \ref{eq8} shows that the our method provides smooth optimization target, so our method achieves consistent improvement across differen metric learning methods and backbones. Table \ref{ECE_minitiered} shows that BEL can reduce the uncertainty, especially for Meta-Baseline, for example, 3.59\% vs 14.69\% for 5-shot on \textit{mini}Imagenet and 1.8\% vs 8.11\% for 1-shot on \textit{tiered}Imagenet. And the uncertainty performance for Meta-Baseline is also lower than ProbMAML ,ABML and MQDA.

\begin{table}
\renewcommand\arraystretch{0.8}
  \caption{Few-shot classification expected calibration error(ECE)\%$\downarrow $ on \textit{mini}Imagenet and \textit{tiered}Imagenet}
  \label{ECE_minitiered}
  \centering
  \begin{threeparttable}
  \begin{tabular}{cccccc}
    \toprule
    \multirow{2}*{Algorithm} & \multirow{2}*{Backbone}  &  \multicolumn{2}{l}{\textit{mini}Imagenet,5-way} &  \multicolumn{2}{l}{\textit{tiered}Imagenet,5-way} \\
    \Xcline{3-6}{0.4pt}
    & & 1-shot & 5-shot & 1-shot & 5-shot \\
    \midrule
    ProbMAML & Conv4 & 4.72 & -- & -- & --\\
    ABML & Conv4 & 1.24 & -- & -- & --\\
    Metabaseline & Conv4 & 0.45 & 14.69 & 8.11 & 25.32\\
    Metabaseline+BEL & Conv4 & \textbf{0.11} & \textbf{3.59} & \textbf{1.70} & \textbf{16.21} \\
    \midrule
    DeepEMD & ResNet-12 & 1.65& 2.19 & 1.42 & \textbf{3.65}\\
    DeepEMD+BEL & ResNet-12 & \textbf{1.61} & \textbf{2.03} & \textbf{1.23} & 3.78\\
    MQDA\cite{METAQDA} & ResNet-18 & 33.56 & 13.86 & -- & --\\
    \bottomrule
  \end{tabular}
    \end{threeparttable}
\end{table}

\subsection{Ablation analysis}
\textbf{Effect of $\lambda$ and $\eta$}. In Eq. \ref{eq8}, $\lambda$ controls the hardness of the optimization target, the smaller value of $\lambda$ means the harder optimization target. $\eta$ is the factor that controls the proportion of prior evidence. The accuracy and ECE of Meta-Baseline are 68.65\% and 8.6\% respectively. Fig. \ref{Acc_ablation_fig} shows that when $\lambda$ 
decreases, a prior evidence proportion can provide the best performance. For example, for $\lambda=0.04$ and $\lambda=0.06$, $\eta=0.4$ performs the best. For $\lambda=0.08$, $\eta=0.4$ performs best. When $\lambda$ increases, less prior information provides better accuracy. Fig. \ref{ECE_ablation_fig} shows that lower $\eta$ is proper for uncertainty quantification. All the results recorded in Fig. \ref{Acc_ablation_fig} and Fig. \ref{ECE_ablation_fig} outperform Meta-Baseline. It demonstrates the effectiveness of our method.
\begin{figure}[htbp]
\centering
\begin{minipage}[t]{0.48\textwidth}
\centering
\label{Acc_ablation_fig}
\includegraphics[width=6cm]{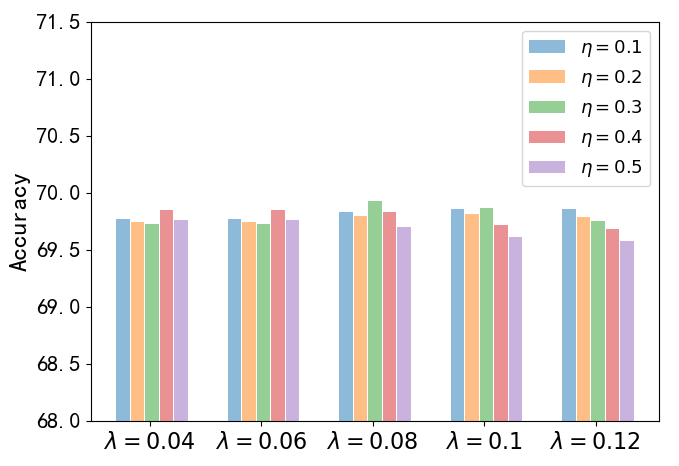}
\caption{Effect of $\lambda$ and $\eta$ on Accuracy(\%)}
\end{minipage}
\begin{minipage}[t]{0.448\textwidth}
\centering
\label{ECE_ablation_fig}
\includegraphics[width=6cm]{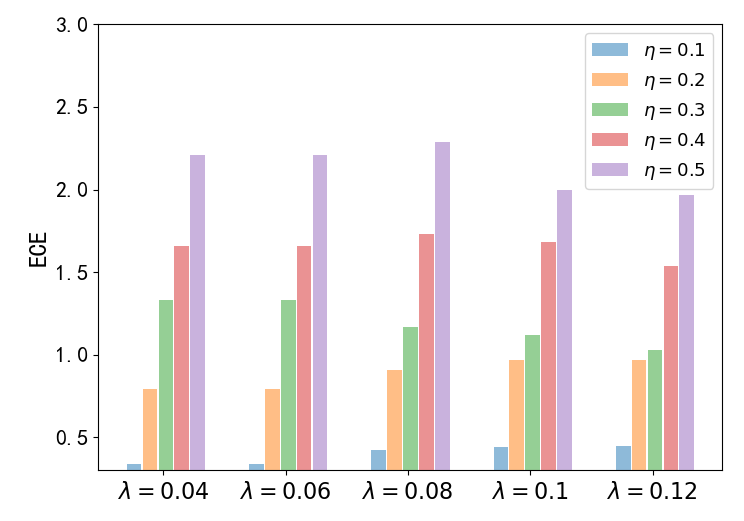}
\caption{Effect of $\lambda$ and $\eta$ on ECE(\%)}
\end{minipage}
\end{figure}

\textbf{Is meta-training necessary for Bayesian evidence fusion?}
From the results we show that BEL can provide extra performance gain and can reduce ECE. In fact, we can also simply fuse the evidence of pre-trained networks and meta-trained networks in the test stag. To clarify this problem, we evaluate the test performance by simply fusing the logits in the few-shot test phase, which can be denoted as simple evidence fusion(SEF). From Table \ref{BEL_SEF} we show that BEL provides better accuracy and uncertainty performance. It indicates that meta-training is necessary for BEL and SEF can also benefit the metric-based methods.

\begin{table}
\renewcommand\arraystretch{1}
\label{BEL_SEF}
  \caption{5-way-5-shot accuracy\%$\uparrow $ and ECE\%$\downarrow $ on \textit{mini}Imagenet and \textit{tiered}Imagenet}
  \label{LSBM}
  \centering
  \begin{threeparttable}
  \begin{tabular}{cccccc}
    \toprule
    \multirow{2}*{Algorithm} & \multirow{2}*{Backbone}  &  \multicolumn{2}{l}{\textit{mini}Imagenet,5-way} &  \multicolumn{2}{l}{\textit{tiered}Imagenet,5-way} \\
    \Xcline{3-6}{0.4pt}
    & & 5-shot Acc. & ECE & 5-shot Acc. & ECE \\
    \midrule
    Metabaseline+BEL & ResNet-12 & \textbf{79.60+-0.16} & 2.70  & \textbf{84.26+-0.18} & \textbf{11.28} \\
    Metabaseline+SEF & ResNet-12 & 79.58+-0.16 & \textbf{1.23} & 83.61+-0.18 & 14.15 \\
    
    \bottomrule
  \end{tabular}
    \end{threeparttable}
\end{table}

\textbf{Compared to label smooth}.
From Eq. \ref{eq11} and Eq. \ref{eq12} we show that BEL can provide adaptive penalty for high uncertainty compared to the label smooth(LS). We set $\eta$ to 0 in Eq. \ref{eq12}. Then we replace our loss function with the label smooth loss function in Eq. \ref{eq10}. For comparison, we sweep $\epsilon$ for LS and record the best result. From Table \ref{LS_BEL} we show that BEL has better accuracy and uncertainty quantification due to the adaptive gradient. Our method can benefit meta-training stage for FSC by simply replace cross-entropy loss with Eq. \ref{eq8}($\eta=0$).
 
\begin{table}[h]
\label{LS_BEL}
  \caption{5-way-1-shot accuracy\%$\uparrow $ and ECE\%$\downarrow $ on \textit{mini}Imagenet and \textit{tiered}Imagenet}
  \centering
  \begin{threeparttable}
  \begin{tabular}{cccccc}
    \toprule
    \multirow{2}*{Algorithm} & \multirow{2}*{Backbone}  &  \multicolumn{2}{l}{\textit{mini}Imagenet,5-way} &  \multicolumn{2}{l}{\textit{tiered}Imagenet,5-way} \\
    \Xcline{3-6}{0.4pt}
    & & 1-shot Acc. & ECE & 1-shot Acc. & ECE \\
    \midrule
    Metabaseline & ResNet-12 & 62.69+-0.23 & 1.70  & 68.65+-0.26 & \textbf{8.66} \\
    Metabaseline+LS & ResNet-12 & 62.95+-0.24 & \textbf{1.26} & 69.33+-0.26 & 13.70 \\
    Metabaseline+BEL($\eta=0$) & ResNet-12 & \textbf{62.96+-0.24} & 1.34  & \textbf{69.83+-0.26} & 11.67 \\
    
    \bottomrule
  \end{tabular}
    \end{threeparttable}
\end{table}

\subsection{More Results}
We report the accuracy and ECE on CIFAR-FS, FC100 and CUB-200-2011 in table \ref{table5} and \ref{table6}. From table \ref{table6} we know that our method can reduce ECE significantly. For example, 0.43\% vs 0.98\%, 0.71\% vs 1.9\% on CUB-200-2001 and 1.72\% vs 2.40\% on CIFAR-FS. Our method can improve the accuracy in some context based on the high baseline without any extra changes on the network.

\setcounter{table}{4}
\begin{table}[htb]
\label{table5}
\renewcommand\arraystretch{0.8}
  \caption{Few-shot classification accuracy and 95\% confidence interval on CIFAR-FS, FC100 and CUB-200-2011 datasets.} 
  \centering
  \resizebox{\textwidth}{!}{ %
  \begin{threeparttable}

 \begin{tabular}{cccccccc} 
% \begin{tabular}{p{5mm}<{\centering}p{50mm}<{\centering}p{5mm}<{\centering}p{5mm}<{\centering}p{5mm}<{\centering}p{5mm}<{\centering}p{5mm}<{\centering}p{5mm}<{\centering}}
    \toprule
    \multirow{2}*{Algorithm} & \multirow{2}*{Backbone}  &  \multicolumn{2}{c}{CIFAR-FS,5-way} & 
    \multicolumn{2}{c}{FC100,5-way} &
    \multicolumn{2}{c}{CUB-200-2011,5-way}\\
    \Xcline{3-8}{0.4pt}
    & & 1-shot & 5-shot & 1-shot & 5-shot & 1-shot & 5-shot \\
    \midrule
    Rethink-Distill\cite{RFS} & ResNet-12 & 73.90$\pm$0.80 & 86.90$\pm$0.50 & 44.6$\pm$0.7 & 60.9$\pm$0.6 & -- & -- \\
    BML\cite{BML}    & ResNet-12 & 73.04$\pm$0.47 & 88.04$\pm$0.33 & 45.00$\pm$0.41 & 63.03$\pm$0.41 & 76.21$\pm$0.63 & 90.45$\pm$0.36\\
    DeepEMD\cite{deepEMD} & ResNet-12 & 73.80$\pm$0.29 & 86.76$\pm$0.62 & \textbf{45.17$\pm$0.26} & 60.91$\pm$0.75 & \textbf{75.81$\pm$0.29} & 88.35$\pm$0.55 \\
    DeepEMD+BEL(ours) & ResNet-12 & \textbf{73.96$\pm$0.29} & \textbf{86.92$\pm$0.62} & 45.10$\pm$0.26 & \textbf{61.07$\pm$0.74} & 75.75$\pm$0.29 & \textbf{88.56$\pm$0.54}\\
    
    \bottomrule
  \end{tabular}
    \end{threeparttable} } %
\end{table}

\begin{table}
\renewcommand\arraystretch{0.7}
  \caption{Few-shot classification expected calibration error(ECE)\%$\downarrow $ on CIFAR-FS, FC100 and CUB-200-2011 datasets.}
  \label{table6}
  \centering
  \resizebox{\textwidth}{!}{ %
  \begin{threeparttable}
  \begin{tabular}{cccccccc}
    \toprule
    \multirow{2}*{Algorithm} & \multirow{2}*{Backbone}  
    &  \multicolumn{2}{c}{CIFAR-FS,5-way} 
    &  \multicolumn{2}{c}{FC100,5-way} 
     &  \multicolumn{2}{c}{CUB-200-2011,5-way} \\
    \Xcline{3-8}{0.4pt}
    & & 1-shot & 5-shot & 1-shot & 5-shot  & 1-shot & 5-shot\\
    \midrule
    DeepEMD & ResNet-12 &  1.52   & 2.40  &  11.95  &  \textbf{3.98}   & 0.98  & 1.9 \\
    DeepEMD+BEL(ours) & ResNet-12 & \textbf{1.3} & \textbf{1.72} & \textbf{11.76} & 5.41 
    & \textbf{0.43} & \textbf{0.71}\\

    \bottomrule
  \end{tabular}
    \end{threeparttable}} %
\end{table}

\subsection{More analysis of Bayesian Evidential Learning}
\textbf{Novel class generalization}.
In Meta-Baseline\cite{MetaBaseline}, Chen \textit{et. al} observe that during meta-learning, improving base class generalization can lead to worse novel class generalization. The novel class generalization can be reflected by the the accuracy on the validation set during meta-training. To further analyze if our method has an effect on the novel class generalization, we visualize the accuracy in validation set during the meta-training stage. Note that $\eta=0$ means that there is no prior evidence for training stage and inference stage. Fig. \ref{Fig5a} and Fig. \ref{Fig5b} show that BEL($\eta=0$) and BEL beat Meta-Baseline by a remarkable margin at the whole meta-training stage. The reason is that our method provides a smooth optimization target to release the overfitting on the base class. Fig. \ref{Fig5b} shows that training and inferring with prior evidence can further improve the novel class generalization compared to BEL($\eta=0$).

% \begin{figure}[htbp]
% \centering
% \begin{minipage}[t]{0.48\textwidth}
% \centering
% \label{Acc_ablation_fig}
% \includegraphics[width=6cm]{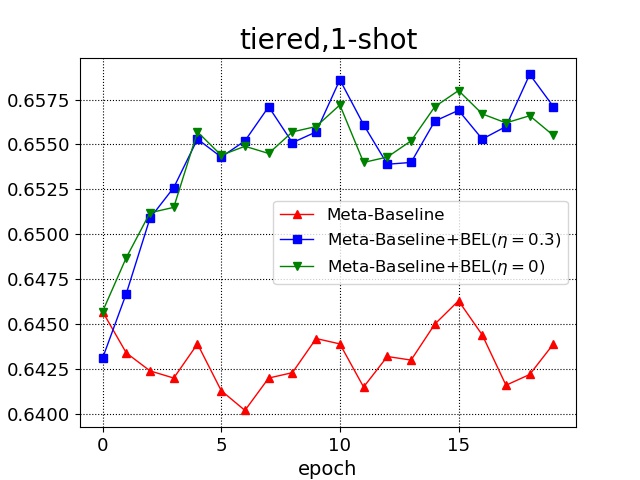}
% \caption{Effect of $\lambda$ and $\eta$ on Accuracy(\%)}
% \end{minipage}
% \begin{minipage}[t]{0.48\textwidth}
% \centering
% \label{ECE_ablation_fig}
% \includegraphics[width=6cm]{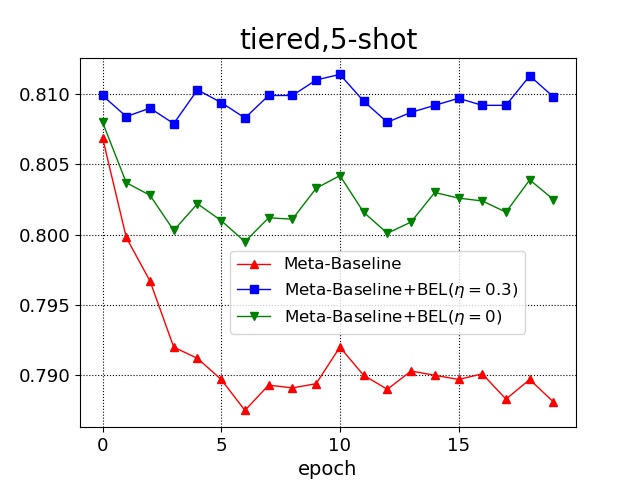}
% \caption{Effect of $\lambda$ and $\eta$ on ECE(\%)}
% \end{minipage}
% \end{figure}
\setcounter{figure}{4}
\begin{figure}[h!]


\label{fig5}
\centering
\subfigure[]{
\label{Fig5a}
\begin{minipage}[t]{0.48\textwidth}
\centering
\includegraphics[width=7cm]{tiered_1shot_val_acc_cmp.jpg}
%\caption{fig2}
\end{minipage}%
}%
\subfigure[]{
\label{Fig5b}
\begin{minipage}[t]{0.48\textwidth}
\centering
\includegraphics[width=7cm]{tiered_5shot_val_acc_cmp.jpg}
%\caption{fig2}
\end{minipage}
}%
\centering
\caption{Novel class generalization}
\end{figure}

\setcounter{equation}{12}
\par
\textbf{Is pre-trained network necessary in the inference?} 
In BEL, pre-trained network provides prior evidence, the network learns to generate posterior evidence in the meta-training stage. In the inference, we use the fixed pre-trained network to provide prior evidence to keep the training process and inference process consistent. To figure out the necessity of pre-trained network in the inference, we also report the results when the fixed pre-trained network is removed($\eta=0$ in the inference). Table \ref{table7} shows that the result without fixed pre-trained network is lower than Meta-Baseline+BEL but outperforms Meta-Baseline. It shows that our method can also improve the performance without pre-trained network in the inference.
\begin{table}[h!]
\renewcommand\arraystretch{0.8}
  \caption{Few-shot classification accuracy\% on \textit{teired}Imagenet}.
  \label{table7}
  \centering
  \begin{threeparttable}
  \begin{tabular}{cccc}
    \toprule
    \multirow{2}*{Algorithm} & \multirow{2}*{Backbone}  
    &  \multicolumn{2}{c}{\textit{tiered}Imagenet,5-way}  \\
    \Xcline{3-4}{0.4pt}
    & & 1-shot & 5-shot\\
    \midrule
    Meta-Baseline & ResNet-12 &  68.65$\pm$0.26   &  83.53$\pm$0.18 \\
    Meta-Baseline+BEL(w/o prior evidence,ours) & ResNet-12 &   69.41$\pm$0.26  & 83.97$\pm$0.18  \\
    Meta-Baseline+BEL(ours) & ResNet-12 &   \textbf{69.93$\pm$0.26}  & \textbf{84.66$\pm$0.18}  \\
    \bottomrule
  \end{tabular}
    \end{threeparttable}
\end{table}

\section{Conclusion}
\label{Conclusion}
In this paper, we propose a novel Bayesian evidential learning(BEL) framework for few-shot classification based on the theory of evidence and our Bayesian evidence fusion theorem. Our method provides an elegant way to fuse the evidence from the pre-trained network and the meta-trained network. We provide detailed gradient analysis and show that BEL can provide smooth optimization target and capture uncertainty. The experiment results demonstrate that our method can improve the accuracy and reduce calibration error for newest metric-based FSC methods. Compared to Bayesian FSC methods, our method has no extra computation costs and no neural network changes. The experimental results show that our method can provide promising accuracy performance.

\appendix

\setcounter{equation}{12}

\section{Appendix A}
\subsection{Definition of Dirichlet Distribution}
\begin{definition}(\textbf{Dirichlet distribution})The Dirichlet distribution is parameterized by concentration parameters . The probability density function of the Dirichlet distribution is given by:
\begin{equation}
D(\mathbf{p}|\bm{\alpha})=\left\{
\begin{array}{rcl}
   &\frac{1}{B(\alpha)}\prod_{k=1}^{K}p_k^{\alpha_i-1}   & for \ \mathbf{p} \in \mathcal{S} _K\\
 & 0 &otherwise
\end{array} \right.
\end{equation}
where $\mathcal{S}_K$ is the K-dimensional unit simplex, defined as:
\begin{equation}
    \mathcal{S}_K=\left \{ \mathbf{p} |\sum_{k=1}^{K}p_k=1\ and\ 0 \le p_1,...,p_K\le1  \right \}, 
\end{equation}
\end{definition}
$B(\bm{\alpha})$ is defined as:
\begin{equation}
B(\bm{\alpha})=\frac{ {\textstyle \prod_{k=1}^{K}(\alpha_k)} }{\Gamma(\alpha_0)} , \quad \alpha_0=\sum_{k=1}^{K} \alpha_k
\end{equation}
\subsection{Detailed Gradient Analysis}
According to Eq. 8, the loss function of the $i^{th}$ sample can be written as:
\begin{equation}
\label{eq16}
\begin{split}
       \mathcal{L}(\mathbf{y},\mathbf{\alpha }) &=  \sum_{k=1}^{K}\mathbf{y}_{k}(\psi(\alpha _{0})-\psi(\alpha _k))+\lambda KL\left [ \mathrm{Dir} (\mathbf{p}|\alpha)||\mathrm{Dir} (\mathbf{p}|\left \langle 1,...,1 \right \rangle ) \right ] \\
&= \sum_{k=1}^{K}\mathbf{y}_{k}\left [ \psi(\alpha _{0})-\psi(\alpha _k) \right ]  + \lambda \mathrm{log} \left ( \frac{\Gamma(\alpha_{0})}{\Gamma(K) {\textstyle \prod_{k=1}^{K}\Gamma(\alpha_{k} )} }  \right )
 +\sum_{k=1}^{K}\lambda (\alpha_{k}-1)\left [ \psi(\alpha_{k}) -\psi(\alpha_{0})  \right ],
\end{split}
\end{equation}
where $\psi(\cdot)$ is the \textit{digamma} function(the logarithmic derivative of $\Gamma(\cdot)$).
Then the gradient of Eq. \ref{eq16} can be computed as:
\begin{equation}
\label{eq17}
    \begin{split} 
       \frac{\partial \mathcal{L}(\mathbf{y},\mathbf{\alpha })}{\partial \alpha_k} &=\psi^{'}(\alpha_0)-\mathbf{y}_k\psi^{'}(\alpha_k)+\lambda[\psi(\alpha_0)-\psi(\alpha_k)]+\lambda[\psi(\alpha_k)-\psi(\alpha_0)]\\
& \quad \ +\lambda(\alpha_k-1)\psi^{'}(\alpha_k)-\lambda(\alpha_0-K)\psi^{'}(\alpha_0)\\
&=\psi^{'}(\alpha_k)[-\mathbf{y} _k+\lambda(\alpha_k-1)]+\psi^{'}(\alpha_0)[-\lambda(\alpha_0-K)+1],
\end{split}
\end{equation}
From Eq. \ref{eq17} we can get the local optimum of Eq. \ref{eq16} is $\bm{\alpha}^{*}=1+\frac{1}{\lambda}\mathbf{y}$. In order to analyze the role of uncertainty, we use the approximation $\psi^{'}(x)\approx \frac{1}{x}$ for the second term in Eq. \ref{eq17}, then the gradient in Eq. \ref{eq17} can be written as: 
\begin{equation}
\label{eq18}
    \begin{split} 
       \frac{\partial \mathcal{L}(\mathbf{y},\mathbf{\alpha })}{\partial \alpha_k} &\approx \psi^{'}(\alpha_k)[-\mathbf{y} _k+\lambda(\alpha_k-1)] +\frac{1}{\alpha_0} [-\lambda(\alpha_0-K)+1]\\
       &=\psi^{'}(\alpha_k)[-\mathbf{y} _k+\lambda(\alpha_k-1)]-\lambda+\frac{\lambda K+1}{\alpha_0}.
\end{split}
\end{equation}
$\frac{\lambda K+1}{\alpha_0} $ is proportional to $u=\frac{K}{\alpha_0} =\frac{K}{S} $, so higher uncertainty generates higher gradient for optimization. The first term $(1-\epsilon )(S(z_k)-y_k)$ of Eq. 11 computes the loss of predictions and the hard local optimum. The second term $\epsilon (z_k-\frac{1}{K})$ in the label smooth makes the predictions smooth to release overconfidence. While the first term $\psi^{'}(\alpha_k)[-\mathbf{y} _k+\lambda(\alpha_k-1)]$ in the Eq. \ref{eq18} computes the loss of predictions and the smooth local optimum, and the second term $-\lambda+\frac{\lambda K+1}{\alpha_0}$ captures the uncertainty.

\section{Appendix B}
\subsection{Details of baseline method}
\textbf{Meta-Baseline.} We use ResNet-12\cite{MetaBaseline} and Conv4\cite{ProtoNet} as backbones. For the pre-training stage, we use the SGD optimizer with momentum 0.9, the learning rate starts from 0.1 and the decay factor is 0.1. On \textit{mini}ImageNet, we train 100 epochs with batch size 128, the learning
rate decays at epoch 90. On \textit{tiered}ImageNet, we train 120 epochs with batch size 512, the learning rate cays at epoch 40 and 80.
\par
\textbf{DeepEMD.} In the pre-training stage\cite{deepEMD}, we train a standard
classification network with all base classes and we use DeepEMD-FCN for validation with the validation set in the training process. After pre-training, the model with the highest validation accuracy is optimized by meta-training for 5,000 episodes. In each training episode, we randomly sample a 5-way 1-shot task with 16 query images,
which is consistent with the testing episodes. For the k-shot classification task, we re-use the trained 1-shot model as the network backbone to extract features and fix it during training and testing. We use the DeepEMD-Sampling as our baseline method. This version of DeepEMD randomly samples some cropped patches from raw images. The randomly sampled patches are then re-scaled to the same input size and are encoded by CNNs. The embeddings of these sampled patches make up the embedding set of an image.
\par
For comparison, we keep all the training settings consistent with baseline methods. During meta-training stage, we copy the pre-trainend network. The fixed pre-trained network extracts the feature to generate logits as prior evidence for the meta-training stage. The gradient only updates the parameters of meta-trained network. For Meta-Baseline, the feature from pre-trained network passes through fully-connected layer, then the logits are computed by cosine similarity. For DeepEMD, the feature is transfered to EMD distance as logits.

\subsection{Details of Datasets}
We evaluated our method on five benchmark datasets to demonstrate its robustness: \textit{mini}Imagenet\cite{MatchingNet}, \textit{tiered}Imagenet\cite{tieredImagenet}, CIFAR-FS\cite{CIFARFS}, FC100\cite{TADAM}, and CUB\cite{CUB}.
\par
\textbf{Dataset derived from ImageNet:}\cite{imagenet} \textit{mini}Imagenet\cite{MatchingNet} contains 100 classes. The classes are split to (64, 16, 20) for (training, few-shot validation, few-shot testing) respectively. Each class contains 600 images of size 84 $\times$84. The \textit{tiered}Imagenet \cite{tieredImagenet} contains 608 classes from 34 super-categories, which are split into 20, 6, 8 super-categories, resulting in 351, 97, 160 classes as training, few-shot validation and few-shot testing respectively. The image size is 84 $\times$84. This setting is more challenging since base classes and novel classes come from difference super-categories\cite{MetaBaseline}.
\par
\textbf{Dataset derived from CIFAR100\cite{CIFAR100}:} CIFAR-FS\cite{CIFARFS} contains 100 classes. The classes are split for (64,16,20). FC100\cite{TADAM} contains 100 classes, where 36 super-classes were divided into 12(including 60 classes), 4 (including 20 classes), 4(including 20 classes). Each class has 600 images and all images are of 32$\times$32.
\par
\textbf{CUB\cite{CUB}:} CUB was originally proposed for fine-grained bird classification, which contains 11788 images for 200 classes. 200 classes are split for (100,50,50).

\bibliography{ref} % 参考文献bib文件名称
\bibliographystyle{plain}

%%%%%%%%%%%%%%%%%%%%%%%%%%%%%%%%%%%%%%%%%%%%%%%%%%%%%%%%%%%%

%%%%%%%%%%%%%%%%%%%%%%%%%%%%%%%%%%%%%%%%%%%%%%%%%%%%%%%%%%%%

% \appendix

% \section{Appendix}

% Optionally include extra information (complete proofs, additional experiments and plots) in the appendix.
% This section will often be part of the supplemental material.

\end{document}